\newif\ifspacetight
\definecolor{grey}{rgb}{0.7,0.7,0.7}
\let\@listiold\@listi
\let\@listiiold\@listii
\let\@listiiiold\@listiii
\let\@listivold\@listiv
\let\@listvold\@listv
\newcommand{\argmin}{\operatornamewithlimits{argmin}}
\newcommand{\ie}{{\it i.e.}~}
\newcommand{\eg}{{\it e.g.}~}
\newtheorem{definition}{Definition}
\newtheorem{proposition}{Proposition}
\providecommand{\DontPrintSemicolon}{\dontprintsemicolon}
\newcommand{\pushline}{\Indp}% Indent
\newcommand{\popline}{\Indm}
\newcommand{\beq}{\begin{equation}}
\newcommand{\eeq}{\end{equation}}
\newcommand{\beqa}{\begin{eqnarray}}
\newcommand{\eeqa}{\end{eqnarray}}
\newcommand{\beqan}{\begin{eqnarray*}}
\newcommand{\eeqan}{\end{eqnarray*}}
\def\ie{{\em i.e.,}\xspace}
\title{Memory-Efficient Backpropagation Through Time}
\author{
 Audr{\=u}nas Gruslys \\ Google DeepMind \\ \texttt{audrunas@google.com}  \And
 Remi Munos \\ Google DeepMind \\ \texttt{munos@google.com} \And
 Ivo Danihelka \\ Google DeepMind \\ \texttt{danihelka@google.com} \And
 Marc Lanctot \\ Google DeepMind \\ \texttt{lanctot@google.com} \And
 Alex Graves \\ Google DeepMind \\ \texttt{gravesa@google.com}
  %David S.~Hippocampus\thanks{Use footnote for providing further
  %  information about author (webpage, alternative
  %  address)---\emph{not} for acknowledging funding agencies.} \\
  %Department of Computer Science\\
  %Cranberry-Lemon University\\
  %Pittsburgh, PA 15213 \\
  %\texttt{hippo@cs.cranberry-lemon.edu} \\
  %% examples of more authors
  %% \And
  %% Coauthor \\
  %% Affiliation \\
  %% Address \\
  %% \texttt{email} \\
  %% \AND
  %% Coauthor \\
  %% Affiliation \\
  %% Address \\
  %% \texttt{email} \\
  %% \And
  %% Coauthor \\
  %% Affiliation \\
  %% Address \\
  %% \texttt{email} \\
  %% \And
  %% Coauthor \\
  %% Affiliation \\
  %% Address \\
  %% \texttt{email} \\
}
\begin{document}
% \nipsfinalcopy is no longer used
%\abovecaptionskip=0pt
%\belowcaptionskip=0pt
%\abovedisplayskip=0pt
%\abovedisplayshortskip=0pt
%\belowdisplayskip=0pt
%\belowdisplayshortskip=0pt

\maketitle

\begin{abstract}
We propose a novel approach to reduce memory consumption of the backpropagation through time (BPTT) algorithm when training recurrent neural networks (RNNs). Our approach uses dynamic programming to balance a trade-off between caching of intermediate results and recomputation. The algorithm is capable of tightly fitting within almost any user-set memory budget while finding an optimal execution policy minimizing the computational cost. Computational devices have limited memory capacity and maximizing a computational performance given a fixed memory budget is a practical use-case. We provide asymptotic computational upper bounds for various regimes. The algorithm is particularly effective for long sequences. For sequences of length 1000, our algorithm saves 95\% of memory usage while using only one third more time per iteration than the standard BPTT.
\end{abstract}

%paper body start

\section{Introduction}

Recurrent neural networks (RNNs) are artificial neural networks where connections between units can form cycles. They are often used for sequence mapping problems, as they can propagate hidden state information from early parts of the sequence back to later points. LSTM (\citet{hochreiter1997long}) in particular is an RNN architecture that has excelled in sequence generation (\citet{eck2002first,sutskever2011generating,Graves2012}), speech recognition (\citet{graves2013speech}) and reinforcement learning (\citet{sorokin2015deep, mnih2016asynchronous}) settings. Other successful RNN architectures include Neural Turing Machines (NTM) (\citet{graves2014neural}), DRAW network (\citet{gregor2015draw}), Neural Transducers (\citet{grefenstette2015learning}).

Backpropagation Through Time algorithm (BPTT) (\citet{rumelhart1985learning,werbos1990backpropagation}) is typically used to obtain gradients during training. One important problem is the large memory consumption required by the BPTT. This is especially troublesome when using Graphics Processing Units (GPUs) due to the limitations of GPU memory.

Memory budget is typically known in advance. Our algorithm balances the tradeoff between memorization and recomputation by finding an optimal memory usage policy which minimizes the total computational cost for any fixed memory budget. The algorithm exploits the fact that the same memory slots may be reused multiple times. The idea to use dynamic programming to find a provably optimal policy is the main contribution of this paper. 

Our approach is largely architecture agnostic and works with most recurrent neural networks. Being able to fit within limited memory devices such as GPUs will typically compensate for any increase in computational cost.

\section{Background and related work}

In this section, we describe the key terms and relevant previous work for memory-saving in RNNs.

\begin{definition}
An {\bf RNN core} is a feed-forward neural network which is cloned (unfolded in time) repeatedly, where each clone represents a particular time point in the recurrence.
\end{definition}

%If a sequence is of length $t$ then an RNN core would be unfolded $t$ times in order to form a recurrent neural network.
For example, if an RNN has a single hidden layer whose outputs feed back into the same hidden layer, then for a sequence length of $t$ the unfolded network is feed-forward and contains $t$ RNN cores.

\begin{definition}
The {\bf hidden state} of the recurrent network is the part of the output of the RNN core which is passed into the next RNN core as an input.
\end{definition}

In addition to the initial hidden state, there exists a single hidden state per time step once the network is unfolded.

\begin{definition}
\label{def-internal}
The {\bf internal state} of the RNN core for a given time-point is all the necessary information required to backpropagate gradients over that time step once an input vector, a gradient with respect to the output vector, and a gradient with respect to the output hidden state is supplied. We define it to also include an output hidden state.
\end{definition}

An internal state can be (re)evaluated by executing a single forward operation taking the previous hidden state and the respective entry of an input sequence as an input. For most network architectures, the internal state of the RNN core will include a hidden input state, as this is normally required to evaluate gradients. This particular choice of the definition will be useful later in the paper.

\begin{definition}
A {\bf memory slot} is a unit of memory which is capable of storing a single hidden state or a single internal state (depending on the context).
\end{definition}

\subsection{Backpropagation through Time}

Backpropagation through Time (BPTT) (\citet{rumelhart1985learning,werbos1990backpropagation}) is one of the commonly used techniques to train recurrent networks. BPTT ``unfolds'' the neural network in time by creating several copies of the recurrent units which can then be treated like a (deep) feed-forward network with tied weights. Once this is done, a standard forward-propagation technique can be used to evaluate network fitness over the whole sequence of inputs, while a standard backpropagation algorithm can be used to evaluate partial derivatives of the loss criteria with respect to all network parameters. This approach, while being computationally efficient is also fairly intensive in memory usage. This is because the standard version of the algorithm effectively requires storing internal states of the unfolded network core at every time-step in order to be able to evaluate correct partial derivatives.

\subsection{Trading memory for computation time}

The general idea of trading computation time and memory consumption in general computation graphs
has been investigated in the automatic differentiation community (\citet{dauvergne2006data}). Recently, the rise of deep architectures and recurrent networks has increased interest in a less general case where the graph of forward computation is a chain and gradients have to be chained in a reverse order. This simplification leads to relatively simple memory-saving strategies and heuristics. In the context of BPTT, instead of storing hidden network states, some of the intermediate results can be recomputed on demand by executing an extra forward operation.

Chen et. al. proposed subdividing the sequence of size $t$ into $\sqrt{t}$ equal parts and memorizing only hidden states between the subsequences and all internal states within each segment (\citet{chen2016}). This uses $O(\sqrt{t})$ memory at the cost of making an additional forward pass on average, as once the errors are backpropagated through the right-side of the sequence, the second-last subsequence has to be restored by repeating a number of forward operations. We refer to this as Chen's $\sqrt{t}$ algorithm.

The authors also suggest applying the same technique recursively several times by sub-dividing the sequence into $k$ equal parts and terminating the recursion once the subsequence length becomes less than $k$. The authors have established that this would lead to memory consumption of $O(k \log_{k+1}(t))$ and computational complexity of $O(t \log_{k}(t))$. %(see the original paper for details).
This algorithm has a minimum possible memory usage of $\log_2(t)$ in the case when $k=1$. We refer to this as Chen's {\it recursive} algorithm.

\section{Memory-efficient backpropagation through time}

We first discuss two simple examples: when memory is very scarce, and when it is somewhat limited.

When memory is very scarce, it is straightforward to design a simple but computationally inefficient algorithm for backpropagation of errors on RNNs which only uses only a constant amount of memory. Every time when the state of the network at time $t$ has to be restored, the algorithm would simply re-evaluate the state by forward-propagating inputs starting from the beginning until time $t$. As backpropagation happens in the reverse temporal order, results from the previous forward steps can not be reused (as there is no memory to store them). This would require repeating $t$ forward steps before backpropagating gradients one step backwards (we only remember inputs and the initial state). This would produce an algorithm requiring $t(t+1)/2$ forward passes to backpropagate errors over $t$ time steps. The algorithm would be $O(1)$ in space and $O(t^2)$ in time.

%\subsection{A Simple Way to Reduce Memory Consumption}
%\label{alg-hidden}

A simple way of reducing memory consumption is simply to store only hidden RNN states at all time points. When errors have to be backpropagated from time $t$ to $t-1$, an internal RNN core state can be re-evaluated by executing another forward operation taking the previous hidden state as an input. The backward operation can follow immediately. This approach can lead to fairly significant memory savings, as typically the recurrent network hidden state is much smaller than an internal state of the network core itself. On the other hand this leads to another forward operation being executed during the backpropagation stage.

\subsection{Backpropagation though time with selective hidden state memorization (BPTT-HSM)}
\label{section-hidden}

The idea behind the proposed algorithm is to compromise between two previous extremes. Suppose that we want to forward and backpropagate a sequence of length $t$, but we are only able to store $m$ hidden states in memory at any given time. We may reuse the same memory slots to store different hidden states during backpropagation. Also, suppose that we have a single RNN core available for the purposes of intermediate calculations which is able to store a single internal state. Define $C(t, m)$ as a computational cost of backpropagation measured in terms of how many forward-operations one has to make in total during forward and backpropagation steps combined when following an optimal memory usage policy minimizing the computational cost.
%ML-note: optimal? What do you mean.. with respect to what? This is the first time we're talking about optimality. Did you mean the computational cost for the minimum number of forward passes given t and m? If so, I suggest rewording appropriately. 'Optimality' is a heavy word to use here and it's arguably imprecise. Or clarify optimal with respect to what metric.
One can easily set the boundary conditions: $C(t, 1) = \frac{1}{2} t(t+1) $ is the cost of the minimal memory approach, while $C(t, m) = 2t - 1$ for all $m \geq t$ when memory is plentiful (as shown in Fig. \ref{illustration-hidden} a). 
Our approach is illustrated in Figure~\ref{drawing-hidden}. Once we start forward-propagating steps at time $t=t_0$, at any given point $y > t_0$ we can choose to put the current hidden state into memory (step 1). This step has the cost of $y$ forward operations. States will be read in the reverse order in which they were written: this allows the algorithm to store states in a stack. Once the state is put into memory at time $y=D(t, m)$, we can reduce the problem into two parts by using a divide-and-conquer approach: running the same algorithm on the $t > y$ side of the sequence while using $m-1$ of the remaining memory slots at the cost of $C(t - y, m - 1)$ (step 2), and then reusing $m$ memory slots when backpropagating on the $t \leq y$ side at the cost of $C(y, m)$ (step 3). We use a full size $m$ memory capacity when performing step 3 because we could release the hidden state $y$ immediately after finishing step 2.

\begin{figure}[!hb]
  \centering
    \includegraphics[width=1.00\textwidth]{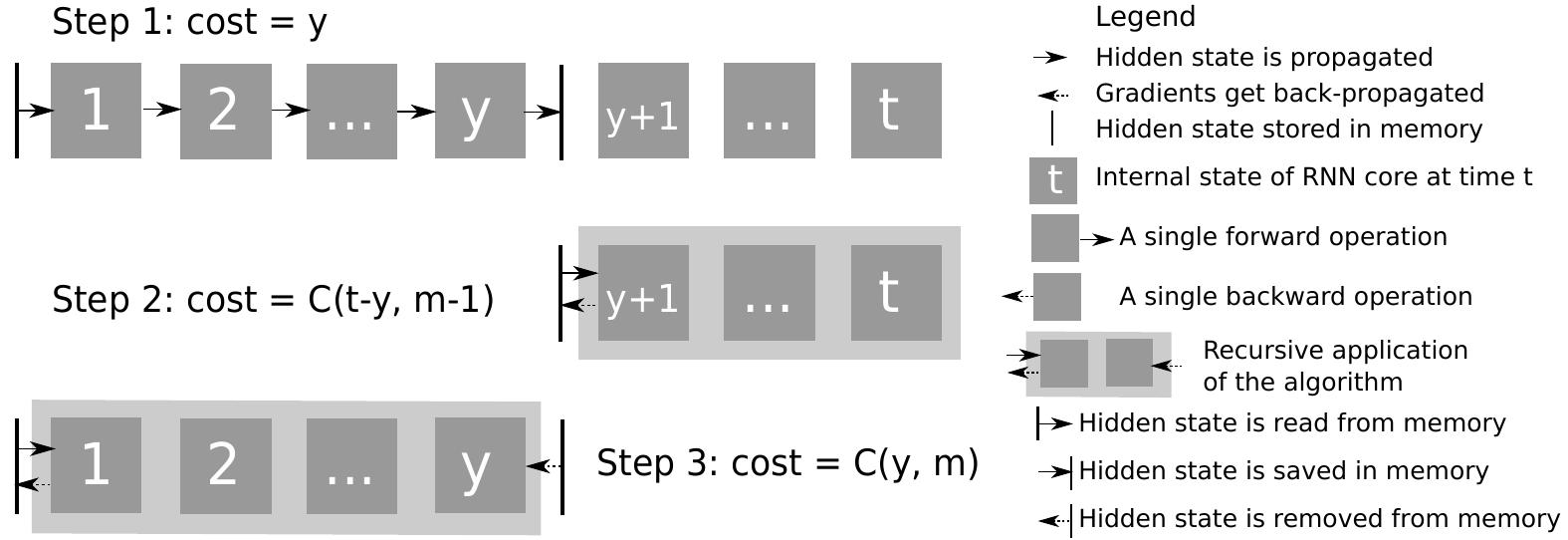}
  \caption{The proposed divide-and-conquer approach.}
  \label{drawing-hidden}
\end{figure}

\begin{figure}[!ht]
\centering
\subfigure[Theoretical computational cost measured in number of forward operations per time step.]{\label{cost-hidden}
\includegraphics[width=0.4\textwidth]{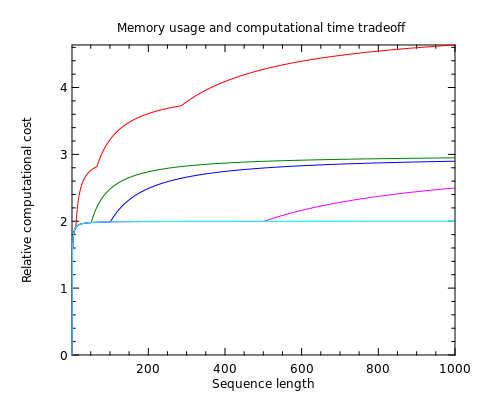}}
\subfigure[Benchmarked computational cost measured in miliseconds.]{  \label{real-times}
\includegraphics[width=0.4\textwidth]{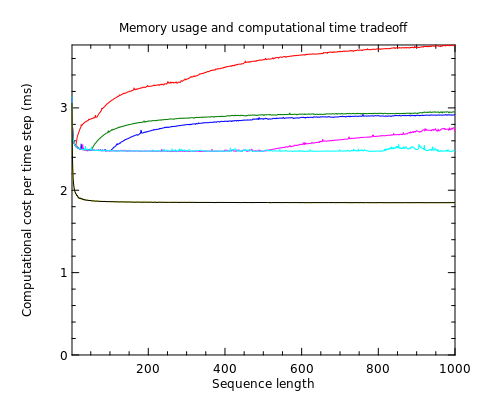}}
  \caption{Computational cost per time-step when the algorithm is allowed to remember 10 (red), 50 (green), 100 (blue), 500 (violet), 1000 (cyan) hidden states. The grey line shows the performance of standard BPTT without memory constraints;
%The left plot shows numerical results while the right plot shows benchmarked timings.
(b) also includes a large constant value caused by a single backwards step per time step which was excluded from the theoretical computation, which value makes a relative performance loss much less severe in practice than in theory.}
\end{figure}

The base case for the recurrent algorithm is simply a sequence of length $t=1$ when forward and backward propagation may be done trivially on a single available RNN network core. This step has the cost $C(1, m) = 1$.

Having established the protocol we may find an optimal policy $D(t, m)$. Define the cost of choosing the first state to be pushed at position $y$ and later following the optimal policy as: 

\begin{equation}
\label{eq-q}
Q(t, m, y) = y + C(t - y, m - 1) + C(y, m)
\end{equation}
\begin{multicols}{2}
\begin{equation}
\label{eq-c}
C(t, m) = Q(t, m, D(t, m))
\end{equation}
~~~~
\begin{equation}
\label{eq-d}
D(t, m) =  \argmin_{1 \le y < t} Q(t, m, y)
\end{equation}
\end{multicols}

\begin{figure}[!hb]
  \centering
    \includegraphics[width=1.0\textwidth]{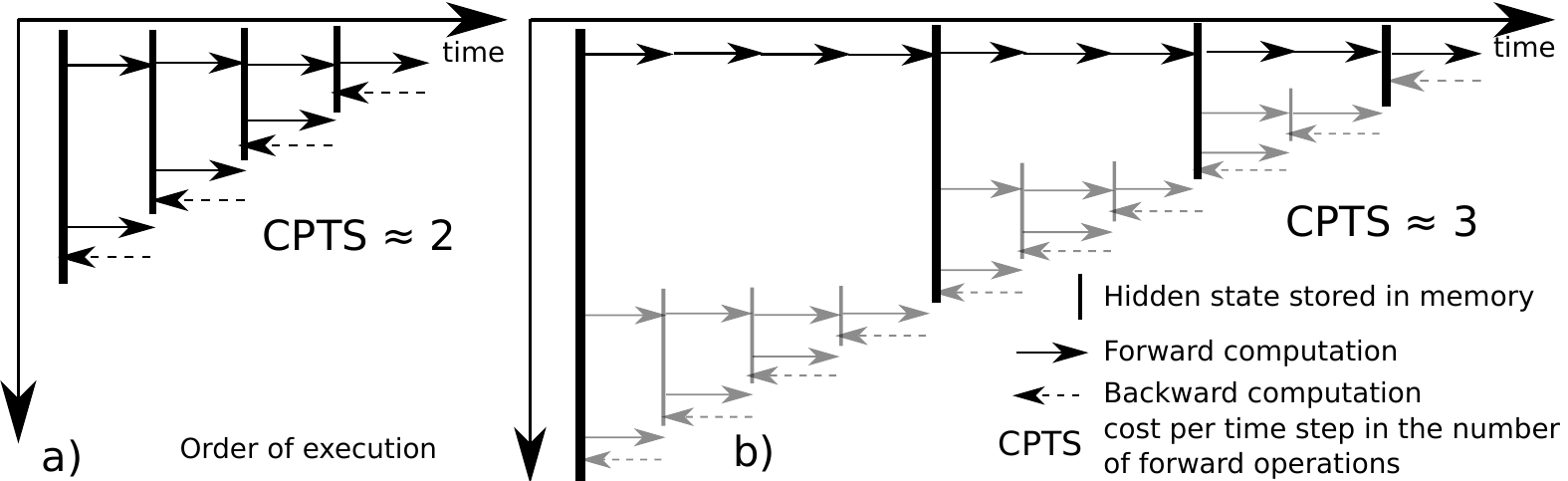}
  \caption{Illustration of the optimal policy for $m = 4$ and a) $t=4$ and b) $t = 10$. Logical sequence time goes from left to right, while execution happens from top to the bottom.}
  \label{illustration-hidden}
\end{figure}

Equations can be solved exactly by using dynamic programming subject to the boundary conditions established previously (\eg as in Figure \ref{cost-hidden}). $D(t, m)$ will determine the optimal policy to follow. Pseudocode is given in the supplementary material. Figure \ref{illustration-hidden} illustrates an optimal policy found for two simple cases.

\subsection{Backpropagation though time with selective internal state memorization (BPTT-ISM)}
\label{section-full}

Saving internal RNN core states instead of hidden RNN states would allow us to save a single forward operation during backpropagation in every divide-and-conquer step, but at a higher memory cost. Suppose we have a memory capacity capable of saving exactly $m$ internal RNN states. First, we need to modify the boundary conditions: $C(t, 1) = \frac{1}{2} t(t+1) $ is a cost reflecting the minimal memory approach, while $C(t, m) = t$ for all $m \ge t$ when memory is plentiful (equivalent to standard BPTT).

As previously, $C(t, m)$ is defined to be the computational cost for combined forward and backward propagations over a sequence of length $t$ with memory allowance $m$ while following an optimal memory usage policy.
As before, the cost is measured in terms of the amount of total forward steps made, because the number of backwards steps is constant.
Similarly to BPTT-HSM, the process can be divided into parts using divide-and-conquer approach (Fig \ref{drawing-full}). For any values of $t$ and $m$ position of the first memorization $y=D(t, m)$ is evaluated. $y$ forward operations are executed and an internal RNN core state is placed into memory. This step has the cost of $y$ forward operations (Step 1 in Figure \ref{drawing-full}). As the internal state also contains an output hidden state, the same algorithm can be recurrently run on the high-time (right) side of the sequence while having on less memory slot available (Step 2 in Figure \ref{drawing-full}). This step has the cost of $C(t-y, m-1)$ forward operations. Once gradients are backpropagated through the right side of the sequence, backpropagation can be done over the stored RNN core (Step 3 in Figure \ref{drawing-full}). This step has no additional cost as it involves no more forward operations. The memory slot can now be released leaving $m$ memory available. Finally, the same algorithm is run on the left-side of the sequence (Step 4 in Figure \ref{drawing-full}). This final step has the cost of $C(y-1, m)$ forward operations. Summing the costs gives us the following equation:
\begin{equation}
\label{eq-q-internal}
Q(t, m, y) =  y + C(y - 1, m) + C(t - y, m - 1)
\end{equation}
Recursion has a single base case: backpropagation over an empty sequence is a nil operation which has no computational cost making $C(0, m) = 0$.

\begin{figure}[!ht]
  \centering
    \includegraphics[width=1.0\textwidth]{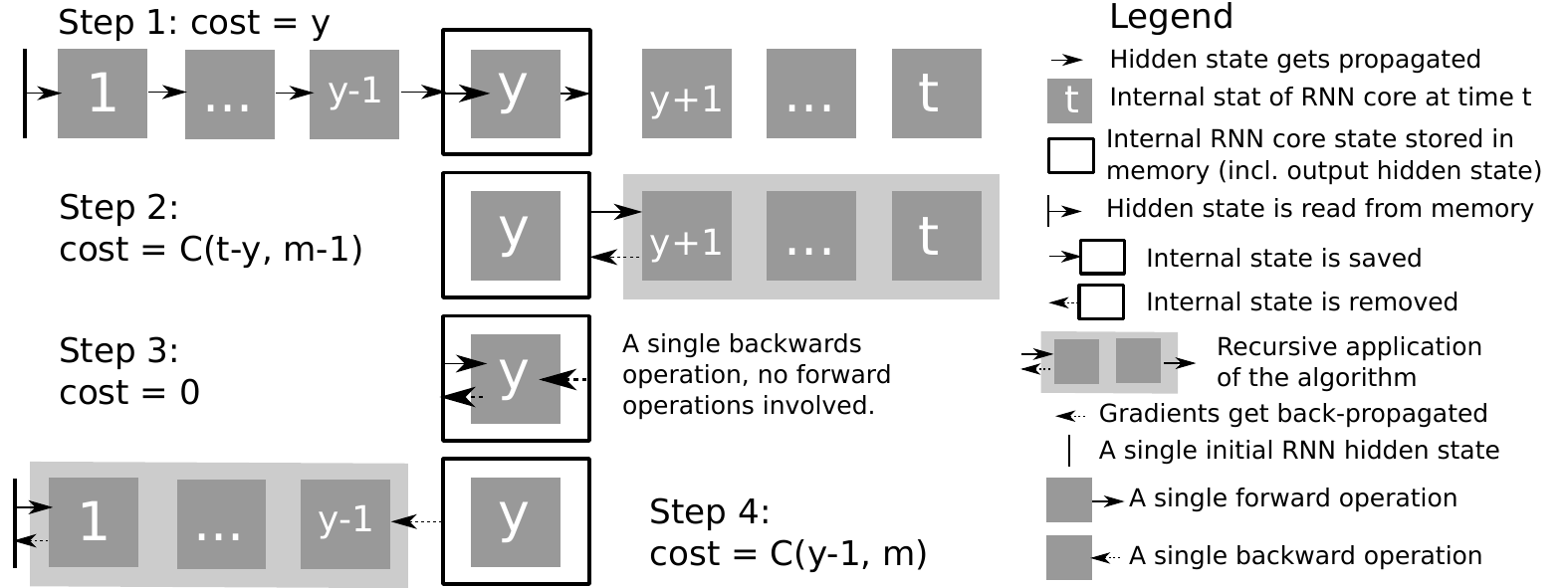}
  \caption{Illustration of the divide-and-conquer approach used by BPTT-ISM.}
  \label{drawing-full}
\end{figure}

Compared to the previous section (\ref{eq-c}) stays the same while (\ref{eq-d}) is minimized over $1 \le y \le t$ instead of $1 \le y < t$. This is because it is meaningful to remember the last internal state while there was no reason to remember the last hidden state. A numerical solution of $C(t, m)$ for several different memory capacities is shown in Figure~\ref{cost-full}.
\begin{equation}
\label{eq-d-internal}
D(t, m) =  \argmin_{1 \le y \le t} Q(t, m, y)
\end{equation}

\begin{figure}[!ht]
\centering
\subfigure[BPTT-ISM (section \ref{section-full}).]{\label{cost-full}
\includegraphics[width=0.4\textwidth]{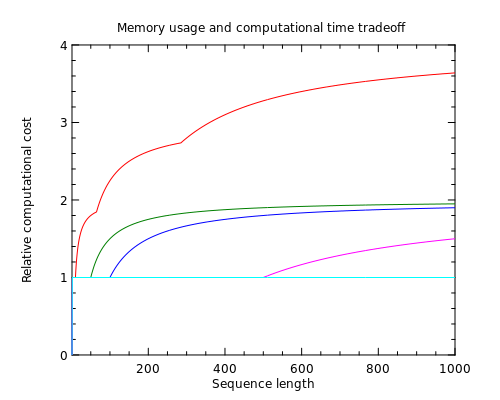}}
~~~~~~~~
\subfigure[BPTT-MSM (section \ref{sec-mixed}).]{  \label{cost-mixed}
\includegraphics[width=0.4\textwidth]{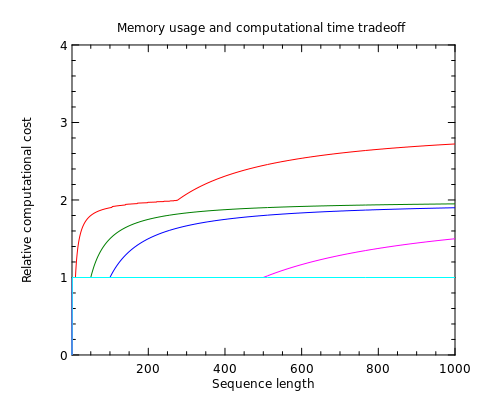}}
  \caption{Comparison of two backpropagation algorithms in terms of theoretical costs. Different lines show the number of forward operations per time-step when the memory capacity is limited to 10 (red), 50 (green), 100 (blue), 500 (violet), 1000 (cyan) internal RNN core states. Please note that the units of memory measurement are different than in Figure \ref{cost-hidden} (size of an internal state vs size of a hidden state). It was assumed that the size of an internal core state is $\alpha=5$ times larger than the size of a hidden state. The value of $\alpha$ influences only the right plot. All costs shown on the right plot should be less than the respective costs shown on the left plot for any value of $\alpha$.}
\end{figure}
% We conducted a numerical experiment and found that $C(t, m) < 3 t^{1 + \frac{1}{m}}$ for sequence lengths $t \le 2 \cdot 10^4$ and memory sizes $m \le 500$ (Figure \ref{asymptotic-internal})
As seen in Figure~\ref{cost-full}, our methodology saves 95\% of memory for sequences of 1000 (excluding input vectors) while using only 33\% more time per training-iteration than the standard BPTT (assuming a single backward step being twice as expensive as a forward step).

\subsection{Backpropagation though time with mixed state memorization (BPTT-MSM)}
\label{sec-mixed}

It is possible to derive an even more general model by combining both approaches as described in Sections \ref{section-hidden} and \ref{section-full}. Suppose we have a total memory capacity $m$ measured in terms of how much a single hidden states can be remembered. Also suppose that storing an internal RNN core state takes $\alpha$ times more memory where $\alpha \geq 2$ is some integer number. We will choose between saving a single hidden state while using a single memory unit and storing an internal RNN core state by using $\alpha$ times more memory. The benefit of storing an internal RNN core state is that we will be able to save a single forward operation during backpropagation.

Define $C(t, m)$ as a computational cost in terms of a total amount of forward operations when running an optimal strategy. We use the following boundary conditions: $C(t, 1) = \frac{1}{2} t(t+1) $ as a cost reflecting the minimal memory approach, while $C(t, m) = t$ for all $m \geq \alpha t$ when memory is plentiful and $C(t - y, m) = \infty$ for all $m \leq 0$ and $C(0, m) = 0$ for notational convenience. We use a similar divide-and-conquer approach to the one used in previous sections.

Define $Q_1(t, m, y)$ as the computational cost if we choose to firstly remember a hidden state at position $y$ and thereafter follow an optimal policy (identical to (~\ref{eq-q})):
\begin{equation}
Q_1(t, m, y) = y + C(y, m) + C(t - y, m - 1)
\end{equation}
Similarly, define $Q_2(t, m, y)$ as the computational cost if we choose to firstly remember an internal state at position $y$ and thereafter follow an optimal policy (similar to (~\ref{eq-q-internal})
 except that now the internal state takes $\alpha$ memory units):
\begin{equation}
\label{eq-q2}
Q_2(t, m, y) =y + C(y - 1, m) + C(t - y, m - \alpha)
\end{equation}
Define $D_1$ as an optimal position of the next push assuming that the next state to be pushed is a hidden state and define $D_2$ as an optimal position if the next push is an internal core state. Note that $D_2$ has a different range over which it is minimized, for the same reasons as in equation \ref{eq-d-internal}:
\begin{equation}
\label{eq-d2}
D_1(t, m) = \argmin_{1 \le y < t} Q_1(t, m, y) \qquad D_2(t, m) = \argmin_{1 \le y \le t} Q_2(t, m, y)
\end{equation}
Also define $C_i(t, m) = Q_i(t, m, D(t, m))$ and finally:
\begin{equation}
\label{eq-c2}
C(t, m) = \min_{i}C_i(t, m) \qquad H(t, m) = \argmin_i C_i(t, m)
\end{equation}
We can solve the above equations by using simple dynamic programming. $H(t, m)$ will indicate whether the next state to be pushed into memory in a hidden state or an internal state, while the respective values if $D_1(t, m)$ and $D_2(t, m)$ will indicate the position of the next push.

\subsection{Removing double hidden-state memorization}
\label{sec-mixed-optimized-2}

Definition \ref{def-internal} of internal RNN core state would typically require for a hidden input state to be included for each memorization.
This may lead to the duplication of information. For example, when an optimal strategy is to remember a few internal RNN core states in sequence, a memorized hidden output of one would be equal to a memorized hidden input for the other one (see Definition \ref{def-internal}).

Every time we want to push an internal RNN core state onto the stack and a previous internal state is already there, we may omit pushing the input hidden state. Recall that an internal core RNN state when an input hidden state is otherwise not known is $\alpha$ times larger than a hidden state. Define $\beta \leq \alpha$ as the space required to memorize the internal core state when an input hidden state is known. A relationship between $\alpha$ and $\beta$ is application-specific, but in many circumstances $\alpha = \beta + 1$. We only have to modify (\ref{eq-q2}) to reflect this optimization:
\begin{equation}
Q_2(t, m, y) =y + C(y - 1, m) + C(t - y, m -  \mathbbm{1}_{y > 1} \alpha  - \mathbbm{1}_{y = 1} \beta)
\end{equation}
$\mathbbm{1}$ is an indicator function. Equations for $H(t, m)$, $D_i(t, m)$ and $C(t, m)$ are identical to (\ref{eq-d2}) and (\ref{eq-c2}).

%\subsection{Generalizing to deep feed-forward networks}

%BPTT-MSM can also be generalized to deep network architectures rather than RNNs as long as the computational graph is a linear chain. The main difference is that different layers in deep architectures are non-homogeneous (\ie have different computational costs and memory requirements) while cores within the same RNN are homogeneous. This difference can be addressed by modifying the dynamic program formulation. More details are given in the supplementary material.

\subsection{Analytical upper bound for BPTT-HSM}
\label{section-analytic-hm}

We have established a theoretical upper bound for BPTT-HSM algorithm as $C(t, m) \le m t^{1+\frac{1}{m}}$. As the bound is not tight for short sequences, it was also numerically verified that $C(t, m) < 4 t^{1+\frac{1}{m}}$ for $t < 10^5$ and $m < 10^3$, or less than $3 t^{1+\frac{1}{m}}$ if the initial forward pass is excluded. In addition to that, we have established a different bound in the regime where $t < \frac{m^m}{m!}$. For any integer value $a$ and for all $t < \frac{m^a}{a!}$ the computational cost is bounded by $C(t, m) \le (a+1)t$. The proofs are given in the supplementary material. Please refer to supplementary material for discussion on the upper bounds for BPTT-MSM and BPTT-ISM.

\subsection{Comparison of the three different strategies}

\begin{figure}[!ht]
  \centering
\subfigure[Using $10 \alpha$ memory]{\label{mebptt-comparison-10}
\includegraphics[width=0.4\textwidth]{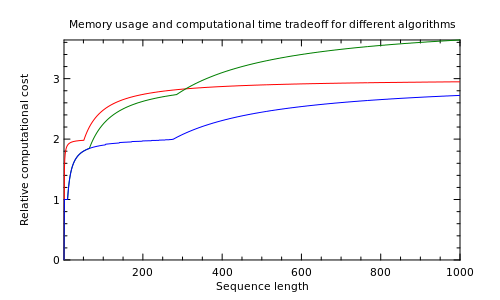}}
\subfigure[Using $20 \alpha$ memory]{  \label{mebptt-comparison-20}
\includegraphics[width=0.4\textwidth]{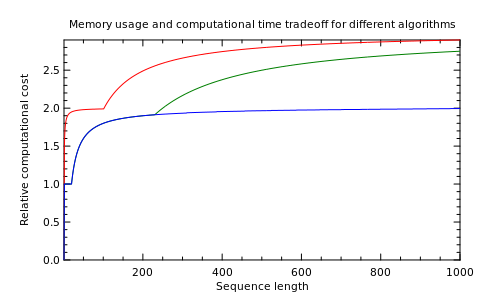}}
  \caption{Comparison of three strategies in the case when a size of an internal RNN core state is $\alpha=5$ times larger than that of the hidden state, and the total memory capacity allows us remember either 10 internal RNN states, or 50 hidden states or any arbitrary mixture of those in the left plot and (20, 100) respectively in the right plot. The red curve illustrates BPTT-HSM, the green curve - BPTT-ISM and the blue curve  - BPTT-MSM. Please note that for large sequence lengths the red curve out-performs the green one, and the blue curve outperforms the other two.}
  \label{mebptt-comparison}
\end{figure}

Computational costs for each previously described strategy and the results are shown in Figure~\ref{mebptt-comparison}. BPTT-MSM outperforms both BPTT-ISM and BPTT-HSM. This is unsurprising, because the search space in that case is a superset of both strategy spaces, while the algorothm finds an optimal strategy within that space. Also, for a fixed memory capacity, the strategy memorizing only hidden states outperforms a strategy memorizing internal RNN core states for long sequences, while the latter outperforms the former for relatively short sequences.

\section{Discussion}

We used an LSTM mapping 256 inputs to 256 with a batch size of 64 and measured execution time for a single gradient descent step (forward and backward operation combined) as a function of sequence length (Figure~\ref{real-times}). Please note that measured computational time also includes the time taken by backward operations at each time-step which dynamic programming equations did not take into the account. A single backward operation is usually twice as expensive than a orward operation, because it involves evaluating gradients both with respect to input data and internal parameters. Still, as the number of backward operations is constant it has no impact on the optimal strategy.

\subsection{Optimality}

The dynamic program finds the optimal computational strategy by construction, subject to memory constraints and a fairly general model that we impose. As both strategies proposed by \citet{chen2016} are consistent with all the assumptions that we have made in section \ref{sec-mixed-optimized-2} when applied to RNNs, BPTT-MSM is guaranteed to perform at least as well under any memory budget and any sequence length. This is because strategies proposed by \citet{chen2016} can be expressed by providing a (potentially suboptimal) policy $D_i(t, m), H(t, m)$ subject to the same equations  for $Q_i(t, m)$.

\subsection{Numerical comparison with Chen's $\sqrt{t}$ algorithm}

Chen's $\sqrt{t}$ algorithm requires to remember $\sqrt{t}$ hidden states and $\sqrt{t}$ internal RNN states (excluding input hidden states), while the recursive approach requires to remember at least $\log_2{t}$ hidden states. In other words, the model does not allow for a fine-grained control over memory usage and rather saves some memory. In the meantime our proposed BPTT-MSM can fit within almost arbitrary constant memory constraints, and this is the main advantage of our algorithm.

\begin{figure}[!ht]
  \centering
    \includegraphics[width=0.8\textwidth]{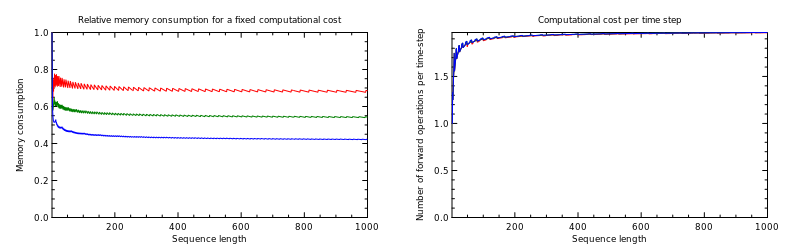}
  \caption{Left: memory consumption divided by $\sqrt{t} (1 + \beta)$ for a fixed computational cost $C=2$. Right: computational cost per time-step for a fixed memory consumption of $\sqrt{t} (1 + \beta)$. Red, green and blue curves correspond to  $\beta=2,5,10$ respectively.}
  \label{comparison-computation}
\end{figure}

The non-recursive Chen's $\sqrt{t}$ approach does not allow to match any particular memory budget making a like-for-like comparison difficult. Instead of fixing the memory budge, it is possible to fix computational cost at $2$ forwards iterations on average to match the cost of the $\sqrt{t}$ algorithm and observe how much memory would our approach use. Memory usage by the $\sqrt{t}$ algorithm would be equivalent to saving $\sqrt{t}$ hidden states and $\sqrt{t}$ internal core states. Lets suppose that the internal RNN core state is $\alpha$ times larger than hidden states. In this case the size of the internal RNN core state excluding the input hidden state is $\beta = \alpha - 1$. This would give a memory usage of Chen's algorithm as $\sqrt{t} (1 + \beta) = \sqrt{t} (\alpha)$, as it needs to remember $\sqrt{t}$ hidden states and $\sqrt{t}$ internal states where input hidden states can be omitted to avoid duplication. Figure \ref{comparison-computation} illustrates memory usage by our algorithm divided by $\sqrt{t} (1 + \beta)$ for a fixed execution speed of $2$ as a function of sequence length and for different values of parameter $\beta$. Values lower than 1 indicate memory savings. As it is seen, we can save a significant amount of memory for the same computational cost.

Another experiment is to measure computational cost for a fixed memory consumption of $\sqrt{t} (1 + \beta)$. The results are shown in Figure \ref{comparison-computation}. Computational cost of $2$ corresponds to Chen's $\sqrt{t}$ algorithm. This illustrates that our approach does not perform significantly faster (although it does not do any worse). This is because Chen's $\sqrt{t}$ strategy is actually near optimal for this particular memory budget. Still, as seen from the previous paragraph, this memory budget is already in the regime of diminishing returns and further memory reductions are possible for almost the same computational cost.

\section{Conclusion}

In this paper, we proposed a novel approach for finding optimal backpropagation strategies for recurrent neural networks for a fixed user-defined memory budget. We have demonstrated that the most general of the algorithms is at least as good as many other used common heuristics. The main advantage of our approach is the ability to tightly fit to almost any user-specified memory constraints gaining maximal computational performance.

\bibliography{bibliography}

\begin{thebibliography}{14}
\providecommand{\natexlab}[1]{#1}
\providecommand{\url}[1]{\texttt{#1}}
\expandafter\ifx\csname urlstyle\endcsname\relax
  \providecommand{\doi}[1]{doi: #1}\else
  \providecommand{\doi}{doi: \begingroup \urlstyle{rm}\Url}\fi

\bibitem[Chen et~al.(2016)Chen, Xu, Zhang, and Guestrin]{chen2016}
Tianqi Chen, Bing Xu, Zhiyuan Zhang, and Carlos Guestrin.
\newblock Training deep nets with sublinear memory cost.
\newblock \emph{arXiv preprint arXiv:1604.06174}, 2016.

\bibitem[Dauvergne and Hasco{\"e}t(2006)]{dauvergne2006data}
Benjamin Dauvergne and Laurent Hasco{\"e}t.
\newblock The data-flow equations of checkpointing in reverse automatic
  differentiation.
\newblock In \emph{Computational Science--ICCS 2006}, pages 566--573. Springer,
  2006.

\bibitem[Eck and Schmidhuber(2002)]{eck2002first}
Douglas Eck and Juergen Schmidhuber.
\newblock A first look at music composition using {LSTM} recurrent neural
  networks.
\newblock \emph{Istituto Dalle Molle Di Studi Sull Intelligenza Artificiale},
  103, 2002.

\bibitem[Graves et~al.(2013)Graves, Mohamed, and Hinton]{graves2013speech}
Alan Graves, Abdel-rahman Mohamed, and Geoffrey Hinton.
\newblock Speech recognition with deep recurrent neural networks.
\newblock In \emph{Acoustics, Speech and Signal Processing (ICASSP), 2013 IEEE
  International Conference on}, pages 6645--6649. IEEE, 2013.

\bibitem[Graves(2012)]{Graves2012}
Alex Graves.
\newblock \emph{Supervised Sequence Labelling with Recurrent Neural Networks}.
\newblock Studies in Computational Intelligence. Springer, 2012.

\bibitem[Graves et~al.(2014)Graves, Wayne, and Danihelka]{graves2014neural}
Alex Graves, Greg Wayne, and Ivo Danihelka.
\newblock Neural turing machines.
\newblock \emph{arXiv preprint arXiv:1410.5401}, 2014.

\bibitem[Grefenstette et~al.(2015)Grefenstette, Hermann, Suleyman, and
  Blunsom]{grefenstette2015learning}
Edward Grefenstette, Karl~Moritz Hermann, Mustafa Suleyman, and Phil Blunsom.
\newblock Learning to transduce with unbounded memory.
\newblock In \emph{Advances in Neural Information Processing Systems}, pages
  1819--1827, 2015.

\bibitem[Gregor et~al.(2015)Gregor, Danihelka, Graves, and
  Wierstra]{gregor2015draw}
Karol Gregor, Ivo Danihelka, Alex Graves, and Daan Wierstra.
\newblock Draw: A recurrent neural network for image generation.
\newblock \emph{arXiv preprint arXiv:1502.04623}, 2015.

\bibitem[Hochreiter and Schmidhuber(1997)]{hochreiter1997long}
Sepp Hochreiter and J{\"u}rgen Schmidhuber.
\newblock Long short-term memory.
\newblock \emph{Neural computation}, 9\penalty0 (8):\penalty0 1735--1780, 1997.

\bibitem[Mnih et~al.(2016)Mnih, Badia, Mirza, Graves, Lillicrap, Harley,
  Silver, and Kavukcuoglu]{mnih2016asynchronous}
Volodymyr Mnih, Adria~Puigdomenech Badia, Mehdi Mirza, Alex Graves, Timothy~P
  Lillicrap, Tim Harley, David Silver, and Koray Kavukcuoglu.
\newblock Asynchronous methods for deep reinforcement learning.
\newblock In \emph{Proceedings of the International Conference on Machine
  Learning (ICML)}, 2016.
\newblock To appear.

\bibitem[Rumelhart et~al.(1985)Rumelhart, Hinton, and
  Williams]{rumelhart1985learning}
David~E Rumelhart, Geoffrey~E Hinton, and Ronald~J Williams.
\newblock Learning internal representations by error propagation.
\newblock Technical report, DTIC Document, 1985.

\bibitem[Sorokin et~al.(2015)Sorokin, Seleznev, Pavlov, Fedorov, and
  Ignateva]{sorokin2015deep}
Ivan Sorokin, Alexey Seleznev, Mikhail Pavlov, Aleksandr Fedorov, and
  Anastasiia Ignateva.
\newblock Deep attention recurrent {Q}-network.
\newblock \emph{arXiv preprint arXiv:1512.01693}, 2015.

\bibitem[Sutskever et~al.(2011)Sutskever, Martens, and
  Hinton]{sutskever2011generating}
Ilya Sutskever, James Martens, and Geoffrey~E Hinton.
\newblock Generating text with recurrent neural networks.
\newblock In \emph{Proceedings of the 28th International Conference on Machine
  Learning (ICML-11)}, pages 1017--1024, 2011.

\bibitem[Werbos(1990)]{werbos1990backpropagation}
Paul~J Werbos.
\newblock Backpropagation through time: what it does and how to do it.
\newblock \emph{Proceedings of the IEEE}, 78\penalty0 (10):\penalty0
  1550--1560, 1990.

\end{thebibliography}

\bibliographystyle{plainnat}

\newpage
\appendix

\section{Pseudocode for BPTT-HSM}

Below is a pseudocode of an algorithm which evaluates an optimal policy for BPTT-HSM. This algorithm has a complexity of $O(t^2 \cdot m)$ but it is possible to optimize to to $O(t \cdot m)$ by exploiting convexity in $t$. In any case, this is a one-off computation which does not have to be repeated while training an RNN.

\newcommand{\tmax}{t_{\small max}}
\newcommand{\mmax}{m_{\small max}}
\newcommand{\Cmin}{C_{\small min}}

\begin{algorithm2e}[h]
\SetKwInOut{Input}{input}\SetKwInOut{Output}{output}
\Input{$\tmax$ -- maximum sequence length; $\mmax$ -- maximum memory capacity}
\textsc{EvaluateStrategy}$(\tmax, \mmax)$ \;
\pushline
Let $C$ and $D$ each be a 2D array of size $\tmax \times \mmax$ \;
\For{$r \in \{ 1, \ldots, \tmax \}$}{
  $C[t][1] \leftarrow \frac{t(t+1)}{2}$ \;
  \For{$m \in \{ t, \ldots, \mmax \}$}{
    $C[t][m] \leftarrow 2t-1$ \;
    $D[t][m] \leftarrow 1$ \;
  }
}
\For{$m \in \{ 2, \ldots, \mmax \}$}{
  \For{$t \in \{ m+1, \ldots, \tmax \}$}{
    $\Cmin \leftarrow \infty$ \;
    \For{$y \in \{ 1, \ldots, t-1 \}$}{
      $c \leftarrow y + C[y][m] + C[t-y][m-1]$ \;
      \If{$c < \Cmin$}{
        $\Cmin \leftarrow c$ \;
        $D[t][m] \leftarrow y$ \;
      }
    }
    $C[t][m] \leftarrow \Cmin$
  }
}
\Return $(C,D)$ \;
\popline
\caption{BPTT-HSM strategy evaluation.}\label{alg:bptt-hsm-eval}
\end{algorithm2e}

Algorithm~\ref{alg:bptt-hsm-exec}, shown below, contains pseudocode
which executes the precomputed policy.

\begin{algorithm2e}[h!]
\SetKwInOut{Input}{input}\SetKwInOut{Output}{output}
\Input{$D$ -- previously evaluated policy;
rnnCore -- mutable RNN core network;
stack -- a stack containing memorized hidden states;
gradHidden -- a gradient with respect to the last hidden state;
$m$ -- memory capacity available in the stack;
$t$ -- subsequence length;
$s$ -- starting subsequence index;}
\textsc{ExecuteStrategy}($D$, rnnCore, stack, gradHidden, $m$, $t$, $s$) \;
\pushline
  hiddenState = \textsc{Peek}(stack) \;
  \If{$t = 0$}{
    \Return gradHidden \;
  }
  \ElseIf{$t = 1$}{
    output $\leftarrow$ \textsc{Forward}(rnnCore, \textsc{GetInput}($s$), hiddenState) \;
    gradOutput $\leftarrow$ \textsc{SetOutputAndGetGradOutput}($s + t$, output) \;
    (gradInput, gradHiddenPrevious) $\leftarrow$ \textsc{Backward}(rnnCore, \textsc{GetInput}($s$), hiddenState, \;
    \hspace{7cm}gradOuput, gradHidden) \;
    \textsc{SetGradInput}($s + t$, gradInput) \;
    \Return gradHiddenPrevious
  }
  \Else{
    $y \leftarrow D[t][m]$ \;
    \For{$i \in \{ 0, \ldots, t-1 \}$}{
      output, hiddenState $\leftarrow$ \textsc{Forward}(rnnCore, \textsc{GetInput}($s + i$), hiddenState) \;
    }
    \textsc{Push}(stack, hiddenState) \;
    gradHiddenR $\leftarrow$ \textsc{ExecuteStrategy}($D$, rnnCore, stack, gradHidden, $m-1$, $t-y$, $s+y$) \;
    \textsc{Pop}(stack) \;
    gradHiddenL $\leftarrow$ \textsc{ExecuteStrategy}($D$, rnnCore, stack, gradHiddenR, $m$, $y$, $s$) \;
    \Return gradHiddenL
  }
\popline
\caption{BPTT-HSM strategy execution.}\label{alg:bptt-hsm-exec}
\end{algorithm2e}

\section{Upper bound of the computational costs for BPTT-SHM}

\subsection{General upper bound}

Consider the following dynamic program
\begin{equation}
C(t, m) = \min_{1 \le y < t} (y + C(t - y, m - 1) + C(y, m))
\end{equation}
with boundary conditions: $C(t, 1) = \frac{1}{2} t(t+2) $ and $C(t, m) = 2t - 1$ for all $m \geq t$

\begin{proposition}
 We have $C(t,m) \leq  m t^{1+1/m}$ for all $t,m \geq 1$.
\end{proposition}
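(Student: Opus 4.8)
The plan is to prove the bound by a double induction: an outer induction on the memory budget $m$, and for each fixed $m$ an inner induction on the sequence length $t$, with the two boundary conditions supplying all the base cases. For $m=1$ the claim reads $C(t,1)=\tfrac12 t(t+2)\le t^2$, which holds for every $t\ge 2$, while $t=1$ falls under the plentiful-memory boundary $C(1,1)=2\cdot 1-1=1\le 1$. For the inner induction, whenever $m\ge t$ we are in the plentiful regime $C(t,m)=2t-1$, and there $m\,t^{1+1/m}\ge t\cdot t^{1+1/m}\ge t^2\ge 2t-1$, so the bound holds; this settles all $t\le m$ and leaves only $t>m$ to be handled by the recursion.

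For the inductive step I would fix $m\ge 2$, assume the bound for $m-1$ at every length and for $m$ at every length below $t$, with $t>m$. Since $C(t,m)$ is a minimum over the checkpoint position, it suffices to exhibit one admissible choice that makes the right-hand side small. Writing $y=t-s$ and applying the two inductive hypotheses to $C(y,m)=C(t-s,m)$ and to $C(t-y,m-1)=C(s,m-1)$ gives
\begin{equation}
C(t,m)\ \le\ (t-s)+m\,(t-s)^{1+1/m}+(m-1)\,s^{1+1/(m-1)}.
\end{equation}
I would bound the middle term with the elementary inequality $(1-x)^{1+1/m}\le 1-x$ for $x\in[0,1]$ (a number in $[0,1]$ raised to a power $\ge 1$ only shrinks), applied with $x=s/t$, giving $m(t-s)^{1+1/m}\le m\,t^{1+1/m}-m\,s\,t^{1/m}$. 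After this substitution the target $C(t,m)\le m\,t^{1+1/m}$ reduces to the single residual inequality
\begin{equation}
(t-s)+(m-1)\,s^{1+1/(m-1)}\ \le\ m\,s\,t^{1/m}.
\end{equation}

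The natural checkpoint length is the real solution $s_0=t^{1-1/m}$, for which $s_0^{1+1/(m-1)}=t$ and $m\,s_0\,t^{1/m}=mt$, so the residual inequality collapses to $(t-s_0)+(m-1)t\le mt$, holding with a strict slack of exactly $s_0=t^{1-1/m}>0$. I would then take the admissible integer $s=\lceil t^{1-1/m}\rceil$, $y=t-s$, and verify that this slack survives rounding: the gap between the two sides, as a function of $s$, has derivative $+1$ at $s_0$, so nudging $s$ up by less than one unit cannot destroy positivity. I expect this rounding-and-feasibility bookkeeping to be the main obstacle. One must confirm that $1\le y<t$ for the chosen $s$ (so both recursive terms are genuinely smaller instances covered by the induction and no out-of-range argument is ever hit), and that the positive slack $t^{1-1/m}$ dominates the $O(t^{1/m})$ correction from replacing $s_0$ by its ceiling, uniformly in $m\ge 2$. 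The convexity estimate and the exact cancellation at $s_0$ are routine; the delicate part is making the integer choice go through cleanly in every regime, especially for $t$ just above $m$, where the slack and the rounding correction are of comparable order and a direct verification (for instance an explicit computation showing the gap equals $t^{1-1/m}+\delta(1-\delta)$-type nonnegative remainders) is likely needed.
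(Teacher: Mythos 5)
Your proof is correct and shares the paper's overall skeleton --- a double induction on $m$ and $t$, plugging the inductive bounds into the recursion, choosing a near-optimal real-valued checkpoint, and controlling the integer rounding --- but the technical route is genuinely different and, if anything, cleaner. The paper works with $g(y)=y+m(t-y)^{1+1/m}+(m+1)y^{1+1/(m+1)}$, defines its checkpoint $\tilde y$ \emph{implicitly} by $t-\tilde y=\tilde y^{m/(m+1)}$, handles rounding via a second-order Taylor bound around the true minimizer with $\max|g''|\le 4$ (costing an additive $2$), and then needs a tangent-line convexity inequality for $x\mapsto x^{1+1/(m+1)}$; the resulting slack is $\tilde y-2$, which forces a separate numerical check of the cases $t<4$. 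You instead take the \emph{explicit} right-segment length $s_0=t^{1-1/m}$ and replace the convexity step by the elementary inequality $(1-x)^{1+1/m}\le 1-x$, which collapses everything to the residual inequality $F(s):=m\,s\,t^{1/m}-(t-s)-(m-1)s^{1+1/(m-1)}\ge 0$ with $F(s_0)=s_0$ and $F'(s_0)=1$. The bookkeeping you flag as delicate does close: $F$ is concave with $F''(s)=-\tfrac{m}{m-1}s^{(2-m)/(m-1)}\ge -2$ for $s\ge 1$ and $m\ge 2$, so
\begin{equation*}
F(s_0+\delta)\;\ge\; s_0+\delta-\delta^2\;\ge\; s_0\;>\;0\qquad\text{for }\delta\in[0,1),
\end{equation*}
which is exactly your anticipated $s_0+\delta(1-\delta)$ remainder; and feasibility $1\le y=t-s\le t-1$ for $s=\lceil t^{1-1/m}\rceil$ reduces to $t^{(m-1)/m}\le t-1$, which holds for all $t\ge m+1\ge 3$ since $(t-1)\ln t\ge m\ln(m+1)>m$. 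The net effect is that your slack $t^{1-1/m}$ is never eaten by rounding, so unlike the paper you need no numerical verification of small cases; the price is only the slightly fussier explicit feasibility check near $t=m+1$.
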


\begin{proof}
It is straightforward to check that the bound is satisfied at the boundaries. Now let us define the boolean functions $A(t, m) := \{ C(t,m) \leq m t^{1+1/m} \}$ and $A(m) := \{ \forall t\geq 1, C(t,m) \leq  m t^{1+1/m} \}$.

Let us prove by induction on $m$ that $A(m)$ is true. Thus assume $A(m)$ is true and let us prove that $A(m+1)$ is also true. For that, we will prove by induction on $t$ that $A(t,m+1)$ is true. Thus for any $t\geq 2$, assume $A(y,m+1)$ is true for all $y<t$ and let us prove that $A(t,m+1)$ is also true.

We have
\beqa
C(t,m+1) &=&  \min_{1 \leq y < t} \big[ y + C(t - y, m) + C(y, m+1)\big] \\
&=&  \min_{1 \leq y < t} \big[ y + m (t-y)^{1+1/m} + (m+1) y^{1+1/(m+1)} \big]  \label{eq:C.bound.1}
\eeqa
using our inductive assumption that $A(y,m+1)$ and $A(t-y, m)$ are true.

For any real number $y\in[1,t-1]$, define $g(y) = y + m (t-y)^{1+1/m} + (m+1) y^{1+1/(m+1)}$. $g$ is convex (as the sum of 3 convex functions) and is smooth over $[1,t-1]$ with
$$g'(y) = 1 - (m+1) (t-y)^{1/m} + (m+2) y^{1/(m+1)},$$
and
$$g''(y) = \frac{m+1}{m} (t-y)^{1/m-1} + \frac{m+2}{m+1} y^{1/(m+1)-1}.$$
Notice that $g''$ is positive and convex, thus 
\beq 
\max_{1\leq s\leq t-1} |g''(s)| = \max ( g''(1),g''(t-1) ) \leq \frac{m+1}{m} (1+(t-1)^{1/m-1})\leq 4.\label{eq:max.g''}
\eeq

Let $y^*$ the (unique) optimum of $g$ (i.e., such that $g'(y^*)=0$). Then we have
\beqan
C(t,m+1) & \leq & g(\lfloor y^* \rfloor) \\
& \leq & g(y^*) + ( y^* - \lfloor y^* \rfloor ) g'(y^*) + \frac{1}{2} ( y^* - \lfloor y^* \rfloor )^2 \max_{1\leq s\leq t-1} |g''(s)|\\
& \leq & g(y^*) +  2 \\
& \leq & g(\tilde y) +  2
\eeqan
where $\tilde y$ defined as the unique solution to 
\beq
t- y = y^{m/(m+1)}.\label{eq:y}
\eeq

Notice that for any $t\geq 2$, $\tilde y\in [1,t)$. We deduce from \eqref{eq:C.bound.1} that
\beqa
C(t,m+1) &\leq& \tilde y + m (t-\tilde y)^{1+1/m} + (m+1) \tilde y^{1+1/(m+1)} + 2 \notag \\
&\leq&  \tilde y + m \tilde y + (m+1) \tilde y^{1+1/(m+1)} +  2 \label{C.bound.2} 
\eeqa

Now, using the convexity of $x\mapsto x^{1+1/(m+1)}$ and since $y<t$, we have
\beqa
t^{1+1/(m+1)} &\geq& \tilde y^{1+1/(m+1)} + (t-\tilde y) (1+\frac{1}{m+1}) \tilde y^{1/(m+1)} \notag \\
& =  & \tilde y^{1+1/(m+1)} + \tilde y^{m/(m+1)} (1+\frac{1}{m+1}) \tilde  y \notag \\
& =  & \tilde y^{1+1/(m+1)} + (1+\frac{1}{m+1}) \tilde y \label{eq:t.bound}
\eeqa
(where the last equality derives from the definition of $\tilde y$ in \eqref{eq:y}). Now putting \eqref{eq:t.bound} into \eqref{C.bound.2} we deduce:
\beqan
C(t,m+1) &\leq& \tilde y + m \tilde y  + (m+1) \big[ t^{1+1/(m+1)} - (1+\frac{1}{m+1}) \tilde y \big] + 2 \\
&=& (m+1) t^{1+1/(m+1)} - \tilde y + 2 \\
&\leq&  (m+1) t^{1+1/(m+1)},
\eeqan
as soon as $ \tilde y \geq 2$, which happens when $t\geq 4$. Now the cases $t <4$ (which actually corresponds to the 2 cases: ($t=3,m=2$)  and ($t=3, m=3$)) are verified numerically.
\end{proof}

\subsection{Upper bound for short sequences}

The algorithm described in the previous section finds an optimal solution using dynamic programming. It is trival to show that $C(t, m)$ is an increasing function in $t$ and we will make use of this property. It is possible to prove a computational upper bound by finding a potentially sub-optimal policy and evaluating its cost. Alternatively, one can find a policy for a given computational cost, and then use this as an upper bound for an optimal policy.

It was established that when memory equals to the sequence ($t = m$) length then $C(t, m) = 2t - 1 < 2t$. Define $T(a, m)$ as the maximum sequence length $t$ for which an average computational cost $C(t, m)/t \le a$. Hence, we can clearly see that $T(2, m) \ge m$.

\begin{proposition}
\label{prop-T}
 We have $T(a, m) \ge \frac{m^{a-1}}{(a-1)!}$
\end{proposition}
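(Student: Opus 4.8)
The plan is to deduce the bound from a single clean recursion. Recall $T(a,m)$ is by definition the largest $t$ with $C(t,m)\le a t$, and recall (as noted in the text) that $C(t,m)$ is increasing in $t$. I will first prove the structural inequality
\[
T(a,m) \;\ge\; T(a-1,m) + T(a,m-1),
\]
and then iterate it. Everything rests on reading this off from the dynamic-programming step $C(t,m)\le y + C(t-y,m-1) + C(y,m)$ with a single, well-chosen checkpoint.

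To establish the recursion, set $y=T(a-1,m)$ and $t = T(a-1,m)+T(a,m-1)$, so that the right-hand segment has length $t-y=T(a,m-1)$. By the definition of $T$, the left segment of length $y$ (processed with all $m$ slots, since the checkpoint is released before it is handled) costs at most $(a-1)y$, while the right segment of length $t-y$ (processed with $m-1$ slots) costs at most $a(t-y)$. Adding the checkpointing pass gives
\[
C(t,m)\;\le\; y + (a-1)y + a(t-y) \;=\; ay + a(t-y) \;=\; a t,
\]
hence $C(t,m)\le a t$ and therefore $T(a,m)\ge t = T(a-1,m)+T(a,m-1)$. The decisive bookkeeping is the asymmetric choice of budgets: putting the cheaper budget $a-1$ on the left is exactly what absorbs the extra $y$ forward operations of the checkpointing pass, since $y+(a-1)y=ay$. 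This is the step I expect to be the main obstacle, because one must track three things at once — that the two subproblems really get memory $m$ and $m-1$ respectively, that the maximality defining $T$ (together with monotonicity of $C$ in $t$) genuinely licenses the two cost bounds at the chosen lengths, and that the boundary conventions feed in consistently.

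With the recursion in hand the proof is a routine induction on $a+m$. The base cases are $a=1$, where $C(1,m)=1$ gives $T(1,m)\ge 1$, and $m=1$, where $C(t,1)=\tfrac12 t(t+2)$ gives $T(a,1)=2a-2\ge 1\ge \tfrac{1}{(a-1)!}=L_a(1)$ for $a\ge 2$ (the single corner $a=m=1$ is checked directly). For the inductive step I would either recognize the recursion as Pascal's identity, obtaining the sharper statement $T(a,m)\ge\binom{m+a-1}{a-1}$, or check directly that the target $L_a(m):=\frac{m^{a-1}}{(a-1)!}$ itself obeys the recursion as an inequality, i.e.\ $L_a(m)\le L_{a-1}(m)+L_a(m-1)$. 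The latter amounts to $m^{a-1}\le (a-1)m^{a-2}+(m-1)^{a-1}$, which is precisely the convexity of $x\mapsto x^{a-1}$ (the secant slope $f(m)-f(m-1)$ is at most $f'(m)=(a-1)m^{a-2}$).

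Finally I read off the claim. In the binomial route one invokes the elementary estimate
\[
\binom{m+a-1}{a-1} \;=\; \frac{(m+1)(m+2)\cdots(m+a-1)}{(a-1)!}\;\ge\; \frac{m^{a-1}}{(a-1)!},
\]
each of the $a-1$ factors in the numerator being at least $m$; in the direct route the bound $T(a,m)\ge L_a(m)$ comes out of the induction immediately. A secondary point to handle with care is the mismatch between the closed form $C(t,1)=\tfrac12 t(t+2)$ and the base value $C(1,m)=1$ for very short sequences, and the integer/rounding behavior of $T$ near $a=1$ or $m=1$; as with Proposition 1 these finitely many small cases can simply be verified by hand.
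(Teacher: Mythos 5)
Your proof is correct, and it is built on the same underlying strategy as the paper's — place checkpoints at positions dictated by $T$ of smaller arguments and induct on $a$ — but you organize the induction differently, and the difference is worth noting. The paper unrolls the whole decomposition in one shot: it takes $t=\sum_{i=1}^{m}T(a,i)$, pays one initial forward pass of cost $t$ to drop $m-1$ checkpoints at intervals $T(a,m),\dots,T(a,2)$, bounds each subsequence by $a$ per step, and concludes $T(a+1,m)\ge\sum_{i=1}^{m}T(a,i)\ge\sum_{i=1}^{m}\frac{i^{a-1}}{(a-1)!}\ge\frac{m^{a}}{a!}$. You instead extract the single-checkpoint recursion $T(a,m)\ge T(a-1,m)+T(a,m-1)$ directly from one step of the dynamic program, with the nice observation that charging the left segment at budget $a-1$ exactly absorbs the $y$ forward operations of the checkpointing pass; unrolling your recursion in $m$ recovers the paper's sum, so the two are equivalent in content. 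What your packaging buys is (i) a cleaner correspondence with the DP (one checkpoint per step, exactly as the algorithm operates), (ii) the sharper intermediate bound $T(a,m)\ge\binom{m+a-1}{a-1}$ via Pascal's identity, from which $\frac{m^{a-1}}{(a-1)!}$ falls out immediately, and (iii) no need for the sum-versus-integral estimate $\sum_{i=1}^m i^{a-1}\ge m^a/a$. Your handling of the base cases and of the paper's inconsistent boundary conditions ($C(t,1)=\tfrac12 t(t+2)$ in the appendix versus $C(1,m)=1$ in the main text) by finite verification is acceptable and mirrors what the paper itself does implicitly by starting the induction at $a=2$.
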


\begin{proof}

This is clearly satisfied for the case $a=2$. Assume that proposition is true for some value $a$. We prove by induction that this is also satisfied for all other values of $a$. 

\begin{figure}[!ht]
  \centering
    \includegraphics[width=1.0\textwidth]{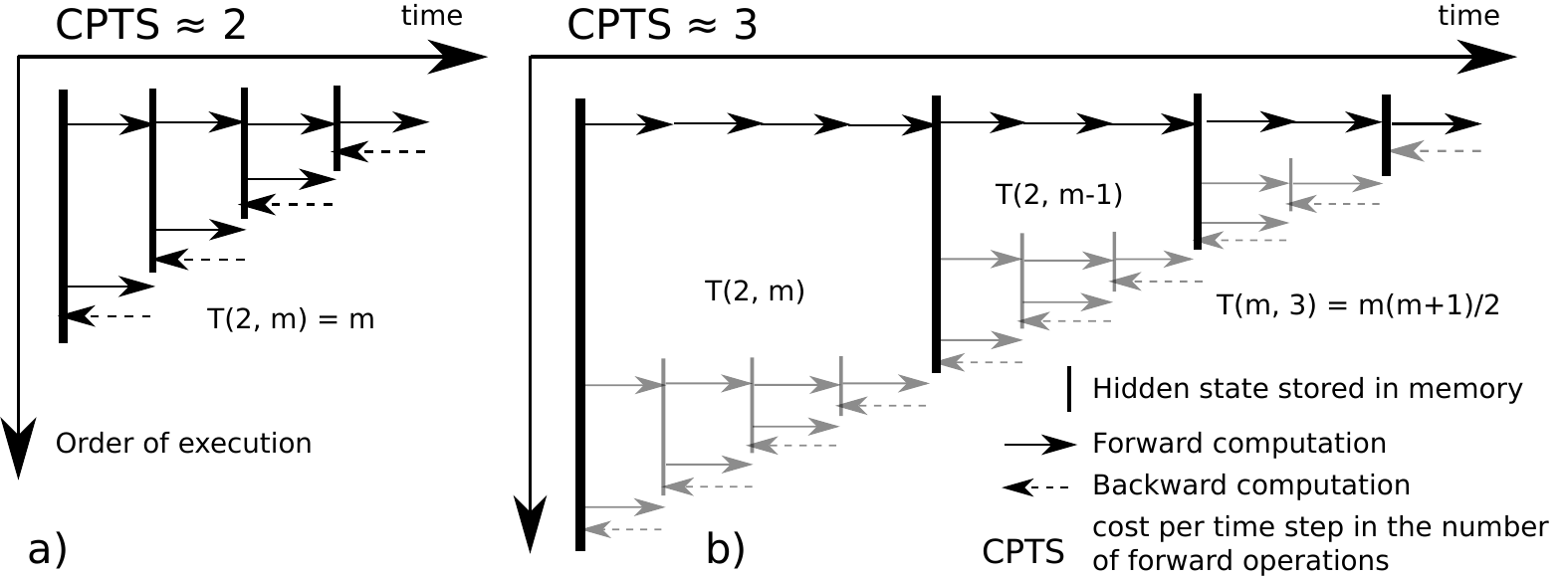}
  \caption{Illustration of the optimal strategy for two easy cases: a) $t = 4, m = 4$ and b) $t = 10, m = 4$. $T(a, m)$ is the maximum sequence length for which $C(t, m) \le at$, where $a=$ CPTS (cost per time step). Note that we can construct more expensive fixed-CPTS strategies by composing cheaper fixed-CPTS strategies. Each sub-strategy in Figure b has one less memory slot available when going to the right. In the case of arbitrary sequence lengths strategies may not look as nice.}
  \label{drawing-heurictics}
\end{figure}

Consider a sequence of length $t = \sum_{i=1}^{m} T(a, i)$. We will make the first initial pass over the sequence at the cost $t$ and will memorize hidden states spaced at intervals $T(a, m)$, $T(a, m-1)$ .. $T(a, 2)$ (Figure \ref{drawing-heurictics}). Once the hidden states are memorized, we can run the same backpropagation algorithm on each sub-sequence each time paying the cost of $\le a$ per time-step (Proposition \ref{prop-T}). This will make the local cost of the algorithm $C(t, a+1) \le t + ta = (a+1)t$. As this makes the cost per time-step $\le a$, $T(a+1, m) \ge t = \sum_{i=1}^{m} T(a, i) \ge \sum_{i=1}^{m} \frac{i^{a-1}}{(a-1)!} \ge \frac{m^{a}}{{a!}}$.

\end{proof}

This implies that $C(t, m) \le (a+1)t$ when $t \le \frac{m^{a}}{a!}$. Please note that this is a much better result comparing to a strategy when interval is sub-divided into equal-sub-intervals several times recursively, as the later strategy would only give $C(t, m) \le (a+1)t$ for $t \le \frac{m^{a}}{a^a}$ while ($a! \ll a^a$) for the same computational cost. It turns out that it is non-trivial to invert this function to state computation explicitly as a function of $t, m$.

\subsection{Analytical upper bounds for BPTT-ISM and BPTT-MSM}

When we are capable of memorizing internal core states instead of hidden states, we can apply almost exactly the same calculation of the upper bound as in Section \ref{section-analytic-hm} while still being conservative. The main difference though a removal of a single forward operation per time-step. This would give us the upper bound of the computational cost as $C(t, m) \le at$ for $t \le \frac{m^{a}}{a!}$. The same upper bound of $C(t, m) \le m t^{1+\frac{1}{m}}$ can be assumed to be true for the case when  $t > \frac{m^{a}}{a!}$, because an internal state also includes a hidden state, and the derived optimal algorithm can not do any worse for the same number of $m$. Please note that in this case the same number of $m$ constitutes for a larger actual memory usage, as the units of measurement are different. $T(a, m)$ is the maximum sequence length for which a computational cost $C(T(a, m), m) \le at$.

Any upper bounds derived for the case of BPTT-HSM will also hold for the case of BPTT-MSM, because the later is generalization of the former technique, and an optimal policy found will be at least as good as.

\section{Generalizing to deep feed-forward networks}

BPTT-MSM can also be generalized to deep network architectures rather than RNNs as long as the computational graph is a linear chain. The main difference is that different layers in deep architectures are non-homogeneous (\ie have different computational costs and memory requirements) while cores within the same RNN are homogeneous. This difference can be addressed by modifying the dynamic program formulation.

To start with, lets derive a strategy when only hidden states are stored. We can recall that an optimal policy of BPTT-HSM algorithm could be found by solving given dynamic programming equations.

Cost if we choose to memorize next state at position $y$ and thereafter we follow an optimal policy:
\begin{equation}
\label{eq-q}
Q(t, m, y) = y + C(t - y, m - 1) + C(y, m)
\end{equation}

Optimal position of the next memorization:
\begin{equation}
\label{eq-d}
D(t, m) =  \argmin_{1 \le y < t} Q(t, m, y)
\end{equation}

Computational cost under the optimal policy:
\begin{equation}
\label{eq-c}
C(t, m) = Q(t, m, D(t, m))
\end{equation}

As in the case of RNNs, we choose to remember only some of the intermediate output results (hidden states) and recompute all internal states (and other hidden states) on demand while fitting within some memory allowance. As sizes of internal representations of different layers are different, it is necessary to include a size of a "working" network layer into our current memory allowance. In the case of RNNs this constant factor could be comfortably ignored. In addition to that, hidden states produced by different layers will also have different sizes.

Suppose that the cost of recomputing layer $y$ is $u_y$ while the size of a hidden state computed after step $y$ has the size of $s_y$. We also assume that the initial input vector is always available. 

Define $U(x, y) = \sum^{x+y}_{i = x+1} u_i$ as a cumulative computational cost of forward propagation when $x$ bottom layers are cut-off and we are forward propagating over $y$ layers . Also, define $p_i$ to be the size of an internal state of some given network layer: this defined as the minimum memory requirement to execute forward and backward operations on a given layer. Neither forward not backward operation is impossible if we have less memory left that the operation requires. Define the maximum memory usage when executing forward operation on layers from $x+1$ to $y$ inclusive as $P(x, y) = \max_{x < i \le y} p_i$. For the reasons discussed previously it is convenient to set computational cost to infinity when we have have less than required memory available: $K(x, y, m) = 0$ if $P(x, y) \le m$ and $K(x, y, m) = \infty$ if $P(x, y) > m$.

Consider a part of the neural network with $x$ bottom layers cut-off. Define $C(t, m - 1, x)$ as a computation cost of a combined forward and back-propagation on such network over $t$ bottom layers.

The cost of a combined forward  and back-propagation of the cut-off section assuming that the next memorization happens at position $y$ is:
\begin{equation}
Q(t, m, y, x) = U(x, y) + K(x, y, m) + C(t - y, m - s_{x+y}, x + y) + C(y, m, x)
\end{equation}

$K(x, y, m)$ prevents us from making an impossible back-propagation commitment when memory is not enough.

It is now trivial to define position of the next memorization as:
\begin{equation}
D(t, m, x) =  \argmin_{1 \le y < t} Q(t, m, y, x)
\end{equation}

And hence we can evaluate a cost under the optimal policy:
\begin{equation}
C(t, m, x) = Q(t, m, D(t, m, x), x)
\end{equation}

As previously, we have to set boundary conditions:

When no extra memory left:
\begin{equation}
C(t, 0, x) = \sum^{x+t}_{i = x} (t - i + 1)u_i
\end{equation}

When the memory is plentiful ($m >= \sum^{x+t}_{i=x} s_i $):
\begin{equation}
C(t, m, x) = u_t + \sum^{x+t-1}_{i = x} 2 u_i
\end{equation}

Another boundary condition applies when the network has zero depth: $C(t, m, x) = 0$ when $x + t > N$ and $N$ is the number of layer in the network and also $C(t, m, x) = 0$ when $t \le 0$. It is convenient to set $C(t, m, x) = \infty$ for $m < 0$ to emphasize that memory can never become negative.

We can solve the equations using dynamic programming, but unlike in the recurrent case when dynamic programming required filling a 3D rather than a 2D array. This means that evaluation of the strategy may become impractical for sequences longer than a few hundred, but a good thing is that a strategy is a one-off computational.

The algorithm is executed as follows: if any any point we start a recursive call of the algorithm at layer $x$ while having memory allowance $m$, we evaluate $y = D(t, m, x)$, forward-propagate states until $y$, memorize the next state and call the algorithm recursively both parts of the sequence.

Similarly equations can be generalized for BPTT-ISM and BPTT-MSM algorithms,

\end{document}